\definecolor[named]{urlblue}{cmyk}{1,0.58,0,0.21}
\DeclareMathOperator{\vcdim}{VC-dim}
\DeclareMathOperator{\gnn}{GNN}
\newcommand{\set}[1]{\bigl\{#1\bigr\}}
\newcommand{\mset}[1]{\bigl\{\!\!\bigl\{#1\bigl\}\!\!\bigl\}}
\newcommand{\Nb}{\mathbb{N}}
\newtheorem{theorem}{Theorem}[section]
\newtheorem{proposition}{Proposition}[section]
\newtheorem{lemma}{Lemma}[section]
\theoremstyle{definition}
\newcommand{\REL}{\mathsf{relabel}}
\newcommand{\bx}{\mathbf{x}}
\newcommand{\Gc}{\mathcal{G}}
\newcommand{\Xc}{\mathcal{X}}
\newcommand{\Cc}{\mathcal{C}}
\newcommand{\Fc}{\mathcal{F}}
\newcommand{\Oc}{\mathcal{O}}
\newcommand{\upd}{\mathsf{upd}}
\newcommand{\agg}{\mathsf{agg}}
\newcommand{\readout}{\mathsf{readout}}
\newcommand{\bh}{\mathbf{h}}
\newcommand{\Rb}{\mathbb{R}}
\theoremstyle{remark}
\newcommand{\wlone}{$1$\textrm{-}\textsf{WL}}
\title{A note on the VC dimension of $1$-dimensional GNNs}
\author{
Noah Dani\"els
\and
Floris Geerts
}
\date{June 2024}
\begin{document}

\maketitle

\begin{abstract}
Graph Neural Networks (GNNs) have become an essential tool for analyzing graph-structured data, leveraging their ability to capture complex relational information. While the expressivity of GNNs, particularly their equivalence to the Weisfeiler-Leman ($\wlone$) isomorphism test, has been well-documented, understanding their generalization capabilities remains critical. This paper focuses on the generalization of GNNs by investigating their Vapnik–Chervonenkis (VC) dimension. We extend previous results to demonstrate that $1$-dimensional GNNs with a \emph{single parameter} have an infinite VC dimension for unbounded graphs. Furthermore, we show that this also holds for GNNs using analytic non-polynomial activation functions, including the $1$-dimensional GNNs that were recently shown to be as expressive as the $\wlone$ test. These results suggest inherent limitations in the generalization ability of even the most simple GNNs, when viewed from the VC dimension perspective.
\end{abstract}

\section{Introduction}
Graph Neural Networks (GNNs) have emerged as a powerful tool for analyzing and processing graph-structured data, which is common in domains like social networks and biological or chemical networks. Unlike traditional neural networks, GNNs naturally incorporate complex relational information, enabling them to learn rich representations of vertices, edges, and entire graphs. GNNs have achieved state-of-the-art performance in many applications, demonstrating their utility in solving real-world problems involving relational data.

The expressive power of GNNs, which refers to their ability to capture and distinguish different graph structures and properties, is a fundamental aspect that determines their effectiveness in various applications. Understanding the expressivity of GNNs is crucial because it directly impacts their capacity to model complex relationships and make accurate predictions. The expressivity of GNNs was linked to the classical Weisfeiler-Leman ($\wlone$) isomorphism test~\citep{Wei+1968} by \citet{expressive1} and \citet{expressive2}. They  independently proved that for any graph, a GNN can be constructed which perfectly simulates $\wlone$, and at the same time, that any GNN is bounded by $\wlone$ in expressive power. Unfortunately, the GNNs constructed in both proofs are non-uniform in the sense that the width of the GNNs is required to grow with the size of the graphs. \citet{expressive3} improved the construction to only require logarithmic growth in the size of the graphs but using a randomized algorithm with high probability guarantees. Very recently, \citet{expressive4} and \citet{expressive5} showed that GNNs of width one are already sufficient if analytic non-polynomial activation functions are used. Moreover, these results show the existence of GNNs which can simulate $\wlone$ for graphs of any size, answering an open question posed by \citet{logic} about whether such a uniform simulation indeed exists.

Besides the expressive power, another important research direction is the ability of GNNs to generalize to unseen data. This is a critical factor for their success in real-world applications, as they determine how well a model trained on a specific dataset can perform in practice on unseen data. The Vapnik–Chervonenkis (VC) dimension~\citep{vc1} is a key concept in learning theory which provides insight into a model’s ability to generalize (see \citet{vc2} for a recent account). A lower VC dimension typically indicates better generalization properties.

A trade-off is expected, however, between a model's complexity and its ability to generalize. A complex model is usually more expressive but often also has poor generalization abilities as it can more easily capture noise specific to the training data. As the GNNs considered by \citet{expressive5} are very simple but still as expressive as $\wlone$, it begs the question how their VC dimension compares to that of larger GNNs. We show that the VC dimension of even the most simple and small GNNs (including those of \citet{expressive5}) is infinite for unbounded graphs. This is in line with previous results by \citet{generalization1} which show that for GNNs using piecewise linear activation functions and whose width and depth is at least two, the VC dimension is infinite. We extend these results to GNNs of width and depth one, and to such GNNs using analytic non-polynomial activation functions and only a single parameter.

\smallskip
\noindent
To summarize, our contributions are the following:
\begin{itemize}
    \item We provide an alternative proof for a result from \cite{generalization1} stating that the VC dimension of GNNs with piecewise linear activation functions is infinite if the depth and width is at least two. Our proof strengthens this result to GNNs whose width and depth is one.
    \item We prove a similar result for GNNs restricted to analytic non-polynomial activation functions. In particular, we apply this to the GNNs described by \cite{expressive5} to show that their generalization ability in terms of VC dimension for unbounded graphs is not improved.
\end{itemize}

\section{Preliminaries}
We first define $\Nb\coloneqq\set{0,1,2,\dots}$ 
and $\Nb_{>0}\coloneqq\set{1,2,\ldots}$. Then, for $n\in\Nb_{>0}$, let $[n]\coloneqq\set{1,2\dots,n}\subset\Nb_{>0}$. We denote multisets, that is,
the generalisation of sets allowing elements to occur multiple times, as $\mset{\dots}$. We use bold letters to denote tuples and the entries of a $k$-tuple $\bx$ are written as $x_1,x_2,\dots x_k$.

\subsection{Graphs}
We consider the set $\Gc$ of finite, undirected, simple, vertex-labelled graphs. A \emph{graph} $G \in \Gc$ is a triple $(V(G), E(G), \ell)$ consisting of a finite set of \emph{vertices} (or \emph{nodes}) $V(G)$, a binary \emph{edge} relation $E(G) \subseteq V(G) \times V(G)$ which is symmetric and irreflexive, and a \emph{vertex-label} (or \emph{vertex-coloring}) function $\ell: V(G) \to \Nb$. The value $\ell(v)$ is the \emph{label} (or \emph{color}) of a vertex $v \in V(G)$. The \emph{order} of a graph $G$ is the number $|V(G)|$ of vertices, typically denoted by $n$. Additionally, the \emph{neighborhood} of a vertex $v$ is denoted by $N_G(v) \coloneqq \{ u \in V(G) \mid (u,v) \in E(G) \}$, and the \emph{degree} of $v$ is given by $d_G(v) \coloneqq |N_G(v)|$. A graph is called \emph{$k$-regular} if every vertex in $G$ has degree $k$.

Let $\ell: V(G) \to \Nb$ and $\ell': V(G) \to \Nb$ be two vertex-coloring functions. We say that $\ell$ and $\ell'$ are \emph{equivalent} if, for all $v, w \in V(G)$, $\ell(v) = \ell(w)$ if and only if $\ell'(v) = \ell'(w)$.

Two graphs $G$ and $H$ are said to be \emph{isomorphic}, denoted $G \simeq H$, if there exists a bijection (an \emph{isomorphism}) $\phi: V(G) \to V(H)$ that preserves both the edge relation and vertex labels, that is, $(u, v) \in E(G)$ if and only if $\bigl(\phi(u), \phi(v)\bigr) \in E(H)$, and $\ell(v) = \ell(\phi(v))$ for all $v \in V(G)$. 
A function $\xi$ defined on graphs is said to be \emph{isomorphism-invariant} if $\xi(G) = \xi(H)$ for all isomorphic graphs $G$ and $H$.

\subsection{Graph neural networks}
A Graph Neural Network (GNN) iteratively computes vectorial representations of the vertices of a labeled graph by aggregating local neighborhood information. Initially each vertex $v$ is represented by a vector $\bh_v^{(0)} \in \Rb^{d^{(0)}}$ which is consistent with the graph's labeling function. That is, $\bh_v^{(0)}=\bh_u^{(0)}$ if $\ell(u)=\ell(v)$. During iteration $t>0$, the GNN computes a $d^{(t)}$-dimensional vertex embedding 
$$
\bh_v^{(t)} \coloneq \upd^{(t)}\biggl(\bh_v^{(t-1)},\agg^{(t)}\Bigl(\mset{\bh_u^{(t-1)} \mathrel{\big|} u \in N_G(v)}\Bigr)\biggr) \in \Rb^{d^{(t)}},
$$
where $\upd^{(t)}$ and $\agg^{(t)}$ are arbitrary parameterized update and aggregation functions. These are typically differentiable functions such as neural networks to facilitate optimization for specific tasks. From the embedding computed in the final iteration $L$, a graph invariant $\xi$ is computed in a final readout step
$$
\xi(G)\coloneq \readout\Bigl(\mset{\bh_v^{(L)}\mathrel{\big|} v \in V(G)}\Bigr) \in \Rb,
$$
where $\readout$ is also a parameterized function.

We refer to the maximum size of the vertex embeddings $d\coloneq \max_t\left(d^{(t)}\right)$ as the \emph{width} (or dimensionality) of the GNN and to the number of iterations (or layers) $L$ before the readout as the \emph{depth}. The number of bits needed to specify the parameters of all update and aggregation functions and the readout function is the bitwidth of the GNN.

We denote the class of GNNs of depth $L$ and width at most $d$ as $\gnn(d,L)$. We further considers two subsets of this class where the update, aggregation and readout functions are restricted. For both subsets the aggregation functions is the same for all iterations and simply sums the embeddings of the neighboring vertices. For the subset $\gnn_{\textsl{pl}}(d,L)$ we restrict the update and readout functions to single layer perceptrons using piecewise linear activation functions. Similarly, for the subset $\gnn_{\textsl{anp}}(d,L)$ the update and readout functions are restricted to single layer perceptrons using analytic non-polynomial activation functions.

\subsection{The Weisfeiler--Leman algorithm}

The \wlone{} or color refinement is a well-studied heuristic for the graph isomorphism problem, originally proposed by~\citet{Wei+1968}. Let $G = (V(G),E(G),\ell)$ be a labeled graph. In each iteration, $t > 0$, the \wlone{} computes a vertex coloring $C^1_t \colon V(G) \to \Nb$, depending on the coloring of the neighbors. That is, in iteration $t>0$, we set
\begin{equation*}
	C^1_t(v) \coloneqq \REL\biggl(\Bigl(C^1_{t-1}(v),\mset{ C^1_{t-1}(u) \mid u \in N_G(v)}\Bigr)\biggr),
\end{equation*}
for all vertices $v$ in $V(G)$,
where $\REL$ injectively maps the above pair to a unique natural number, which has not been used in previous iterations. In iteration $0$, the coloring $C^1_{0}\coloneqq \ell$. To test if two graphs $G$ and $H$ are non-isomorphic, we run the above algorithm in ``parallel'' on both graphs. If the two graphs have a different number of vertices colored $c$ in $\Nb$ at some iteration, the \wlone{} \emph{distinguishes} the graphs as non-isomorphic. Moreover, if the number of colors between two iterations, $t$ and $(t+1)$, does not change, that is, the cardinalities of the images of $C^1_{t}$ and $C^1_{i+t}$ are equal, or, equivalently,
\begin{equation*}
	C^1_{t}(v) = C^1_{t}(w) \iff C^1_{t+1}(v) = C^1_{t+1}(w),
\end{equation*}
for all vertices $v$ and $w$ in $V(G)$, the algorithm terminates. For such $t$, we define the \emph{stable coloring}
$C^1_{\infty}(v) = C^1_t(v)$, for $v$ in $V(G)$. The stable coloring is reached after at most $\max \{ |V(G)|,|V(H)| \}$ iterations~\citep{Gro2017}. 

\subsection{Generalisation ability and VC dimension}
A class $\Cc$ of binary classifiers is said to \emph{shatter} a set $X$ of inputs if every possible classification of elements in $X$ is realizable. In other words, for every partitioning of $X$ into disjoint $X^+$ and $X^-$, there is a classifier $h$ in $\Cc$ such that $h$ assigns a positive label to $x$ if and only if $x$ is in $X^+$. The \emph{VC dimension} $\vcdim_\Xc(\Cc)$ of a class $\Cc$ for a set of inputs $\Xc$ is the size of the largest subset $X \subseteq \Xc$ which the class can shatter.

Specifically for GNNs, we write that $\vcdim_\Xc(\Cc)$ is the maximal number $m$ of graphs $\set{G_1, G_2, \dots, G_m} \subseteq \Xc$ such that for any $\bx\in\set{0,1}^m$ there exists a GNN in $\Cc$ such that for the graph invariant $\xi_\textbf{x}$ computed by this GNN $\xi_\textbf{x}(G_i) \geq t$ if and only if $x_i=1$ for a fixed threshold $t$.

The VC dimension is a key concept in learning theory which provides insight into the model's ability to generalize from training data to unseen data. The VC dimension can be used to bound with high probability the difference between the empirical error and the true error. We refer to \cite{vc1} and \cite{vc2} for details.
A class with low VC dimension is thus able to generalize better, but this usually comes at the cost of reduced expressivity.

\section{Formal statement of previous and new results}
Given these preliminaries, we can now state the relevant previous known results about the expressivity and generalisation ability of GNNs. We start with a result relating the expressive power of GNNs to the $\wlone$ test. The following theorem states that GNNs can perfectly simulate $\wlone$ if the width of the GNN is allowed to grow with $n$.

\begin{theorem} \citep{expressive1, expressive2} \label{th:expressive}
    Let $n \in \Nb_{>0}$. Then there exists a GNN in $\gnn_{\textsl{pl}}(\Oc(n),n)$ such that for all $t\in[n]$ the vertex embedding computed in the $t$-th layer is equivalent to the coloring produced by the $\wlone$ test after $t$ iterations on all graphs of order at most $n$. 
\end{theorem}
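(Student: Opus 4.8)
The plan is to argue by induction on the layer index $t$, establishing the stronger claim that one can build a GNN whose layer-$t$ embedding $\bh_v^{(t)}$ \emph{encodes} the $\wlone$ color $C^1_t(v)$ — in the sense that $\bh_v^{(t)}=\bh_w^{(t)}$ iff $C^1_t(v)=C^1_t(w)$ — while moreover using at every level embeddings that are one-hot vectors in $\Rb^{d^{(t)}}$ with $d^{(t)}=\Oc(n)$. For $t=0$ one takes $\bh_v^{(0)}$ to be the one-hot encoding of $\ell(v)$, which is consistent with the labeling as required by the definition of a GNN and uses only the finitely many labels that can occur on graphs of order at most $n$; the case $t=1$ of the theorem is then the first instance of the inductive step.

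For the inductive step, observe that in $\gnn_{\textsl{pl}}$ the aggregation is summation, so the input handed to $\upd^{(t)}$ at a vertex $v$ is the pair $\bigl(\bh_v^{(t-1)},\sum_{u\in N_G(v)}\bh_u^{(t-1)}\bigr)$. Since the $\bh_u^{(t-1)}$ are the one-hot encodings of the colors $C^1_{t-1}(u)$, their sum is precisely the histogram of the multiset $\mset{C^1_{t-1}(u)\mid u\in N_G(v)}$, so this pair is in bijective correspondence with the argument $\bigl(C^1_{t-1}(v),\mset{C^1_{t-1}(u)\mid u\in N_G(v)}\bigr)$ of $\REL$, which by definition determines $C^1_t(v)$. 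It therefore suffices to construct a single-layer perceptron with a piecewise linear activation that realizes, on the finitely many such pairs arising from graphs of order at most $n$, the composite map ``configuration $\mapsto C^1_t(v) \mapsto$ its one-hot encoding'', with the level-$t$ colors enumerated into $\Oc(n)$ coordinates.

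I expect this last step to be the main obstacle, and it has two parts. First, one must fix a global assignment of level-$t$ colors to $\Oc(n)$ coordinates, exploiting that although across all graphs of order at most $n$ exponentially many colors may occur, any single such graph exhibits at most $n$ of them, so coordinates may be shared among colors that never need to be distinguished simultaneously. Second, one must exhibit an affine map followed by a piecewise linear activation whose composition agrees with the resulting finite relabeling map on all configurations that actually occur; here one uses that once the affine layer separates the finitely many (one-hot-structured) input configurations to distinct values, a piecewise linear activation can be made to pass through any prescribed finite set of input–output pairs. The delicate point throughout is that $\REL$ is defined using \emph{fresh} natural numbers, so its simulation by a single fixed parameterized function is only possible because restricting to graphs of order at most $n$ renders the relevant configuration space finite — which is exactly why the construction must be non-uniform in $n$. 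The readout is irrelevant for this statement, and taking $L=n$ layers with $d^{(t)}=\Oc(n)$ places the construction in $\gnn_{\textsl{pl}}(\Oc(n),n)$.
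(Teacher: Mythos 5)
There is a wrinkle you could not have known: the paper does not prove Theorem~\ref{th:expressive} at all — it is quoted from \citet{expressive1} and \citet{expressive2} — so your proposal can only be measured against the constructions in those works. Your outline (induction over layers, one-hot colors so that sum-aggregation yields the neighbor-color histogram, and non-uniformity in $n$ to make the configuration space finite) is indeed the standard skeleton. However, the step you yourself flag as ``the main obstacle'' is a genuine gap, not a routine detail. You propose a \emph{global} assignment of level-$t$ $\wlone$ colors to $\Oc(n)$ coordinates, arguing that colors which never co-occur in a single graph of order at most $n$ may share a coordinate. But this assignment is not yours to choose freely: in a single fixed GNN the coordinate given to a level-$t$ color must be \emph{computed} by the fixed update function from the compressed level-$(t-1)$ data, namely from the pair (own coordinate, histogram of neighbor coordinates). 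Once level-$(t-1)$ colors have been merged into shared coordinates, two distinct level-$t$ colors arising (possibly in different graphs) from identical compressed configurations are \emph{forced} onto the same output coordinate. So you must show that there exists a recursion-consistent compression with range $\Oc(n)$ that remains injective on the colors present within \emph{every} graph of order at most $n$ simultaneously — a simultaneous perfect-hashing/coloring statement over an exponentially large universe of configurations that you assert but never establish. The same issue already appears at $t=0$: labels range over all of $\Nb$, so ``the finitely many labels that can occur on graphs of order at most $n$'' is not finite without an additional normalization assumption.

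For comparison, the cited proofs sidestep exactly this point. The construction of \citet{expressive1} is non-uniform in the \emph{graph}, not only in $n$: the weights are built for a given graph, within which at most $n$ colors occur, so one-hot encodings of width at most $n$ need no cross-graph sharing at all. \citet{expressive2} instead avoid one-hot compression by encoding colors injectively into real numbers (so that a weighted sum over a bounded multiset is itself injective) and then realizing the required finite maps by perceptrons on the resulting finite domain. Either fix — restricting to a per-graph construction, or replacing your shared-coordinate scheme by an injective real-valued encoding of the finitely many configurations arising on graphs of order at most $n$ — would close the gap; as written, the consistency of your $\Oc(n)$-coordinate sharing across all graphs is unproven and is precisely the crux of the theorem. (The remaining issue — that a single piecewise linear activation is shared across output coordinates when you interpolate the finite relabeling map — is minor and can be handled by offsetting different coordinates into disjoint intervals of the activation's domain.)
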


\cite{expressive3} showed that this can be improved exponentially by letting the width grow as $\Oc(\log n)$. It was then shown by \citet{sammy1} that in the case of polynomial activations, a uniform solution, that is, where the size of the GNN does not need to grow with $n$, is impossible. Whether this holds for other activation functions was recently answered by \citet{expressive5} who proved that for analytic non-polynomial activation functions, $1$-dimensional GNNs can simulate $\wlone$ for any graph.

\begin{theorem} \citep{expressive5}
    Let $a:\Rb\rightarrow\Rb$ be any analytic non-polynomial function and $L \in \Nb_{>0}$. Then there exists a $1$-dimensional GNN in $\gnn_{\textsl{anp}}(1,L)$ such that for all $t\in[L]$ the vertex embedding computed in the $t$-th layer is equivalent to the coloring produced by the $\wlone$ test after $t$ iterations on all graphs.
\end{theorem}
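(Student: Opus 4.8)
The plan is to choose the parameters of a depth-$L$ network in $\gnn_{\textsl{anp}}(1,L)$ one layer at a time, maintaining as inductive invariant that the countable value set $A_t\coloneqq\set{\bh_v^{(t)} \mid v\in V(G),\,G\in\Gc}$ produced in layer $t$ is (i)~\emph{equivalent} to the round-$t$ coloring, i.e.\ $\bh_v^{(t)}=\bh_w^{(t)}\iff C^1_t(v)=C^1_t(w)$ for all $v,w$ in every graph $G$, and (ii)~\emph{$\mathbb{Q}$-linearly independent} as a subset of $\Rb$. For the base case I would not take $\bh_v^{(0)}=\ell(v)$ --- the set $\Nb$ is not $\mathbb{Q}$-independent, and a sum-aggregating layer applied to it would already identify two vertices whose neighbourhoods carry the same label-sum but different label-multisets --- but instead fix once and for all an injection $\phi\colon\Nb\to\Rb$ onto a $\mathbb{Q}$-linearly independent set (for instance $\phi(k)=\log p_k$ with $p_k$ the $k$-th prime, which is $\mathbb{Q}$-independent by unique factorization) and put $\bh_v^{(0)}\coloneqq\phi(\ell(v))$; this is consistent with $\ell$, equivalent to $C^1_0=\ell$, and satisfies~(ii).

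For the inductive step, layer $t$ of such a network has the form $\bh_v^{(t)}=a\bigl(w_1\bh_v^{(t-1)}+w_2\sum_{u\in N_G(v)}\bh_u^{(t-1)}+b\bigr)$ with scalars $w_1,w_2,b$ that we may choose freely, and I would argue that the invariant is preserved for Lebesgue-almost every $(w_1,w_2,b)\in\Rb^3$, so a good choice exists. The complement of the good set is covered by a countable union of null sets: $B_1=\set{(w_1,w_2,b)\mid w_2=0 \text{ or } w_1/w_2\in\mathbb{Q}}$; for each pair of distinct $(\alpha,s),(\alpha',s')$ with $\alpha,\alpha'\in A_{t-1}$ and $s,s'$ finite multiset-sums of elements of $A_{t-1}$, the zero set of the real-analytic map $(w_1,w_2,b)\mapsto a(w_1\alpha+w_2s+b)-a(w_1\alpha'+w_2s'+b)$; and for each finite family of distinct such pairs $(\alpha_i,s_i)$ and each nonzero $(q_i)_{i=1}^{k}\in\mathbb{Z}^{k}$, the zero set of $(w_1,w_2,b)\mapsto\sum_i q_i\,a(w_1\alpha_i+w_2s_i+b)$. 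Only countably many sets occur, because there are only countably many finite labelled graphs and hence $A_{t-1}$ and its set of finite multiset-sums are countable, and the zero set of a real-analytic function on $\Rb^3$ is null as soon as that function is not identically zero. It then remains to check that avoiding $B_1\cup B_2\cup B_3$ forces the invariant at level $t$: avoiding $B_1$ together with $\mathbb{Q}$-independence of $A_{t-1}$ makes the pre-activation $w_1\bh_v^{(t-1)}+w_2\sum_{u\in N_G(v)}\bh_u^{(t-1)}$ an injective function of the pair $\bigl(\bh_v^{(t-1)},\mset{\bh_u^{(t-1)}\mid u\in N_G(v)}\bigr)$, since a collision would yield a nontrivial $\mathbb{Z}$-linear relation, coordinate-wise of the form $w_1c+w_2d=0$ with $c,d\in\mathbb{Z}$, forcing $w_1/w_2\in\mathbb{Q}$; avoiding $B_2$ then makes $a$ injective on the arguments that actually occur, so the layer becomes injective exactly on the round-$t$ color classes (using the level-$(t-1)$ invariant to pass between values and colors and between their multisets), which is~(i); and avoiding $B_3$ is by definition~(ii) for $A_t$.

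The one analytically substantive point, and the step I expect to be the main obstacle, is showing that $F(w_1,w_2,b)\coloneqq\sum_{i=1}^{k} q_i\,a(w_1\alpha_i+w_2s_i+b)$ is not identically zero when $a$ is analytic non-polynomial, the pairs $(\alpha_i,s_i)$ are distinct, and all $q_i\neq0$ (the $B_2$ family being the case $k=2$, $q=(1,-1)$). Writing $V_i\coloneqq(\alpha_i,s_i,1)\in\Rb^{3}$, which are pairwise non-proportional because of the trailing~$1$, I would expand $a(x)=\sum_{n\ge0}a_nx^n$ about $0$ and restrict $(w_1,w_2,b)$ to a small ball, obtaining $F=\sum_n a_nP_n$ with $P_n(\theta)\coloneqq\sum_i q_i\langle V_i,\theta\rangle^{n}$ homogeneous of degree $n$; comparing homogeneous parts, $F\equiv0$ forces $a_nP_n\equiv0$ for every $n$. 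Since $a$ is non-polynomial its Taylor series at $0$ has infinitely many nonzero coefficients (otherwise $a$ would agree with a polynomial on a neighbourhood of $0$, hence everywhere), so $P_n\equiv0$ for infinitely many $n$; but powers $\langle V_1,\cdot\rangle^{n},\dots,\langle V_k,\cdot\rangle^{n}$ of pairwise non-proportional linear forms are linearly independent once $n\ge k-1$ (a classical fact, provable by restriction to a generic line and a Vandermonde determinant), so $P_n\equiv0$ for such an $n$ forces all $q_i=0$, a contradiction. Iterating the inductive step for $t=1,\dots,L$ then produces the required network --- no readout is needed for the statement --- and the only routine points I would take care over are the countability claims and the exact degree threshold in the linear-independence fact about powers of linear forms.
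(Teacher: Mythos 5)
First, a point of comparison that matters: this theorem is stated in the paper only as a citation to \citet{expressive5}; the paper contains no proof of it, so there is no in-paper argument to check you against. Judged on its own, your proposal is essentially sound, and it takes a genuinely different route from the constructive one in the cited work: rather than exhibiting explicit weights, you argue that Lebesgue-almost every choice of the per-layer affine parameters $(w_1,w_2,b)$ preserves the invariant ``equivalent to the round-$t$ WL coloring and $\mathbb{Q}$-linearly independent'', by covering the bad parameters with countably many zero sets of not-identically-zero real-analytic functions (countability coming from the countably many finite labelled graphs). The analytic core --- that $\sum_i q_i\,a(w_1\alpha_i+w_2 s_i+b)$ cannot vanish identically when $a$ is analytic non-polynomial, the pairs $(\alpha_i,s_i)$ are distinct and the integers $q_i$ are nonzero --- is correctly reduced to two true facts: a non-polynomial analytic function has infinitely many nonzero Taylor coefficients at $0$, and $n$-th powers of pairwise non-proportional linear forms are linearly independent once $n\ge k-1$. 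What this buys is uniformity over all graphs and applicability to an arbitrary prescribed activation $a$ with almost any parameters; what it gives up, relative to \citet{expressive5}, is constructiveness (no explicit single parameter).

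One step is wrong as written, though it is redundant: the claim that avoiding $B_1$ (i.e.\ $w_2\neq 0$, $w_1/w_2\notin\mathbb{Q}$) together with $\mathbb{Q}$-independence of $A_{t-1}$ already makes the pre-activation injective on pairs. A collision reads $w_1(\alpha-\alpha')+w_2(s-s')=0$, a single real equation with \emph{real} coefficients $w_1,w_2$; $\mathbb{Q}$-linear independence of $A_{t-1}$ does not allow a ``coordinate-wise'' split, and the ratio $-(s-s')/(\alpha-\alpha')$ is in general irrational (e.g.\ with $\log 2,\log 3\in A_{t-1}$, take $\alpha-\alpha'=\log 2-\log 3$ and $s-s'=2\log 2-\log 3$), so irrationality of $w_1/w_2$ does not exclude the collision. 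Fortunately the induction never needs this: avoiding your $B_2$ family already forces distinct pairs $(\alpha,s)\neq(\alpha',s')$ to produce distinct post-activation values, hence distinct pre-activations, which is all that is used; alternatively, add the countably many collision planes $\set{w_1(\alpha-\alpha')+w_2(s-s')=0}$ to the bad set directly. With that repair noted, the base case via $\log p_k$, the inductive step (using $\mathbb{Q}$-independence so that neighbor sums determine neighbor multisets), the degree-threshold fact about powers of linear forms, and the observation that no readout is needed are all in order.
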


From these results it seems that the activation function plays an important role in determining the expressive power of GNNs. But what about their generalisation ability? In the case of piecewise linear activation functions, \citet{generalization1} showed that for unbounded graphs, that is, for graphs of arbitrary order, the VC dimension infinite.

\begin{theorem} \citep{generalization1} \label{th:gnn_pp_infinity1}
For all $d, L$ at least two, it holds that
the VC dimension of $\gnn_{\textsl{pl}}(d,L)$ is unbounded. That is,
$\vcdim_{\Gc}({\gnn_{\textsl{pl}}(d,L)})=\infty$.
\end{theorem}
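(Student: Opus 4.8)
The plan is to prove the stronger statement that $\gnn_{\textsl{pl}}(1,L)$ already has infinite VC dimension for every $L\ge 1$; since $\gnn_{\textsl{pl}}(1,L)\subseteq\gnn_{\textsl{pl}}(d,L)$ this yields the theorem for all $d,L\ge 2$ (and incidentally the strengthening announced in the introduction). Fix $m\in\Nb_{>0}$. As the family to be shattered I take the simplest graphs possible: let $G_i$ be the edgeless graph with $n_i\coloneqq 4^{\,i-1}$ vertices, all labelled $0$, for $i\in[m]$. These are pairwise non-isomorphic since they have distinct orders, and — crucially — on an edgeless graph the sum aggregation is always $0$, so every update layer acts the same way on every vertex and every vertex of $G_i$ carries the same embedding after every layer.

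The idea is then to pack the desired classification $\bx\in\set{0,1}^m$ into a \emph{single} real parameter and to let each graph read off the relevant digit through an oscillating piecewise linear readout. Take as activation (used by every perceptron of the GNN) the sawtooth $\sigma(y)\coloneqq y-\lfloor y\rfloor$, which is linear on each unit interval, hence piecewise linear. Given $\bx$, set the parameter $\theta_\bx\coloneqq\sum_{j=1}^m x_j 4^{-j}\in[0,\tfrac13]$ and use the $1$-dimensional GNN whose first update layer ignores both of its inputs and outputs the constant $\theta_\bx=\sigma(\theta_\bx)$, whose remaining $L-1$ update layers are $h\mapsto\sigma(h)$ (which leaves the embedding unchanged, as $\theta_\bx\in[0,1)$), and whose readout is $\mset{h_v\mid v}\mapsto\sigma\bigl(\sum_v h_v\bigr)$. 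All of these are single-layer perceptrons with activation $\sigma$, so this GNN lies in $\gnn_{\textsl{pl}}(1,L)$, and $\xi_\bx(G_i)=\sigma(n_i\theta_\bx)=\sigma\bigl(4^{\,i-1}\theta_\bx\bigr)$.

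What remains is a short base-$4$ computation. Writing $4^{\,i-1}\theta_\bx=\sum_{j=1}^m x_j 4^{\,i-1-j}$, the terms with $j<i$ are nonnegative integers, the term $j=i$ contributes $x_i/4$, and the tail $\sum_{j>i}x_j 4^{\,i-1-j}$ lies in $[0,\tfrac1{12}]$; since $\tfrac14+\tfrac1{12}<1$ there is no further wraparound, so $\xi_\bx(G_i)\in[0,\tfrac1{12}]$ if $x_i=0$ and $\xi_\bx(G_i)\in[\tfrac14,\tfrac13]$ if $x_i=1$. With the fixed threshold $t\coloneqq\tfrac16$ this gives $\xi_\bx(G_i)\ge t\iff x_i=1$, so $\set{G_1,\dots,G_m}$ is shattered; letting $m\to\infty$ finishes the argument.

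I expect the only genuinely contentious point to be the admissibility of the activation: one must be willing to count the sawtooth $y\mapsto y-\lfloor y\rfloor$ (equivalently a bounded triangle wave) as a \emph{piecewise linear} activation, and it is exactly its unbounded number of linear pieces that supplies the infinite VC dimension, mirroring the classical fact that $\set{y\mapsto\operatorname{sign}\sin(\omega y)\mid\omega\in\Rb}$ shatters arbitrarily large finite sets. If one instead insisted on finitely many pieces (genuine ReLU-type networks), this exact family would collapse: on edgeless graphs $\xi_\bx$ becomes a piecewise linear function of $n_i$ with boundedly many pieces and cannot realize the alternating patterns needed to shatter large sets. One would then have to route the scalar feature through the graph's edge structure, but the bounded depth and width cap the number of linear regions of $\xi_\bx$ as a function of that feature, so some form of periodic activation seems unavoidable — which is also why the analytic non-polynomial case treated later will rely on genuinely oscillatory (sine-like) behaviour.
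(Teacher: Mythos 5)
There is a genuine gap, and it is exactly the point you flag yourself: your construction stands or falls with admitting the sawtooth $y\mapsto y-\lfloor y\rfloor$ as a ``piecewise linear activation function.'' In the class $\gnn_{\textsl{pl}}$ as intended here (and in the cited source of the theorem), piecewise linear means affine on each of \emph{finitely many} pieces, in the spirit of ReLU-type perceptrons; an activation with infinitely many linear pieces is not in the class. Allowing it would make the theorem vacuous as a statement about GNNs at all, since thresholded sawtooth units already shatter arbitrary sets of integer inputs without any graph structure --- which is also why you had to retreat to edgeless graphs and push all the work into the activation. So as written, the proof does not establish the theorem for the class it is about.

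Moreover, your closing claim that ``some form of periodic activation seems unavoidable'' for width and depth one is false, and the paper's proof shows why. The paper's Proposition~\ref{prop:pl-bit} uses a fixed activation with only four linear pieces (a triangular bump supported on $[-3,-1]$, peaking at $-2$), a single integer parameter $\gamma_\bx=\sum_k x_k 2^{k-1}$, and one layer of width one computing $f(\gamma_\bx-d_G(v))$; the combinatorial richness is placed in the \emph{graphs}, not the activation: graph $G_j$ is built (via dummy vertices) so that its vertex degrees, shifted by $2$, enumerate exactly the integers in $[0,2^b-1]$ whose $j$-th bit is $1$. The bump fires precisely at the vertex whose degree matches $\gamma_\bx+2$, and the sum readout yields $\xi_\bx(G_j)=x_j$. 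The flaw in your heuristic is that the graph-level output as a function of the parameter is a sum over unboundedly many vertices of distinct degrees, so its number of linear regions grows with the graph order and is not capped by the width and depth of the network. Your base-$4$ encoding of $\bx$ into a single parameter with one ``digit'' read off per graph is, however, essentially the right idea for the analytic non-polynomial case, where the paper uses $\sin$ in exactly this way (Lemma~\ref{lemma:sin} and Proposition~\ref{prop:gnn_sin}); for the piecewise linear case you need to move the oscillation out of the activation and into the degree sequence of the graphs.
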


The restriction that $d$ and $L$ should be at least two follows from the construction of the lower bound given by \citet{generalization1}. Indeed,
the GNNs used in that proof have $d=L=2$.  

As our first contribution, we present an alternate proof of Theorem~\ref{th:gnn_pp_infinity1}  showing that for piecewise polynomial activation functions, a width and depth of 1 is also sufficient.

\begin{theorem}[Stronger version of Theorem~\ref{th:gnn_pp_infinity1}] \label{th:gnn_pp_infinity2}
For all $d$, $L$ in $\Nb_{>0}$, it holds that
the VC dimension of $\gnn_{\textsl{pl}}(d,L)$ is unbounded. That is, it holds that  $\vcdim_{\Gc}({\gnn_{\textsl{pl}}(d,L)})=\infty$.
\end{theorem}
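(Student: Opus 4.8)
The plan is to establish the statement for width and depth exactly one and then bootstrap to all $d,L$. \textbf{Step 1 (reduction to $\gnn_{\textsl{pl}}(1,1)$).} First I would note that $\gnn_{\textsl{pl}}(1,1)$ is contained in $\gnn_{\textsl{pl}}(d,L)$ for every $d,L\in\Nb_{>0}$, in the sense that every graph invariant computed by a GNN of the former class is also computed by one of the latter: for the depth, prepend $L-1$ layers whose update is the identity single-layer perceptron (the identity is affine, hence a valid piecewise linear perceptron, and it preserves the consistency requirement on $\bh_v^{(0)}$); for the width, leave the extra $d-1$ coordinates inert and have the readout perceptron ignore them. Since the VC dimension is monotone under inclusion of hypothesis classes, it suffices to prove $\vcdim_\Gc(\gnn_{\textsl{pl}}(1,1))=\infty$, i.e.\ that for every $m\in\Nb_{>0}$ some $m$ graphs are shattered by $\gnn_{\textsl{pl}}(1,1)$.

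\textbf{Step 2 (witnesses and a canonical architecture).} Fix $m\in\Nb_{>0}$ and, for $i\in[m]$, let $G_i$ be the edgeless graph on $i$ vertices all labelled $0$; these graphs are pairwise non-isomorphic. Put $\bh_v^{(0)}\coloneqq 1\in\Rb$ for every vertex, and take the single update layer to be $\bh_v^{(1)}\coloneqq\bh_v^{(0)}$, realized by the perceptron with weight $1$ on the self-term, weight $0$ on the summed-neighborhood term, bias $0$, and piecewise linear activation equal to the identity on a neighborhood of $1$. Since each $G_i$ is edgeless we get $\bh_v^{(1)}=1$ for all $v$, hence $\sum_{v\in V(G_i)}\bh_v^{(1)}=i$, and the sum-pooling readout perceptron (weight $1$, bias $0$) outputs $\xi(G_i)=\sigma(i)$, where the piecewise linear activation $\sigma$ of the readout is still free to choose.

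\textbf{Step 3 (shattering).} Fix the threshold $t\coloneqq\frac12$. Given $\bx\in\set{0,1}^m$, let $\sigma=\sigma_{\bx}$ be the continuous piecewise linear function that equals $x_i$ at each integer $i\in[m]$, interpolates linearly on every interval $[i,i+1]$, and is constant outside $[1,m]$; it has $\Oc(m)$ breakpoints and is a legitimate piecewise linear activation. The resulting GNN lies in $\gnn_{\textsl{pl}}(1,1)$ and satisfies $\xi_{\bx}(G_i)=x_i\in\set{0,1}$, so $\xi_{\bx}(G_i)\ge t$ exactly when $x_i=1$. Hence $\set{G_1,\dots,G_m}$ is shattered; as $m$ was arbitrary the VC dimension is infinite, and by Step~1 the same holds for every $\gnn_{\textsl{pl}}(d,L)$.

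\textbf{Main obstacle.} The argument is short, and its only genuine subtlety is that it relies on allowing piecewise linear activations with an \emph{unbounded} number of linear pieces, so that a single readout perceptron can realize the lookup table $\sigma_{\bx}$: with a fixed activation such as ReLU a width-one depth-one GNN computes $\xi(G)=\sigma\bigl(w\sum_v\bh_v^{(1)}+b\bigr)$ with $\sigma$ monotone on the relevant range and so cannot shatter even two graphs, and a fixed activation with $k$ pieces caps the number of graphs shatterable along this route at $\Oc(k)$ — this is precisely what buys the strengthening over the $d=L=2$ construction behind Theorem~\ref{th:gnn_pp_infinity1}. The one modelling point to pin down is the readout convention (I use sum-pooling followed by a perceptron); if instead the readout applies a perceptron element-wise before summing, one replaces the edgeless graphs by stars $K_{1,m+i}$, whose unique high-degree vertex again lets the readout activation play the role of $\sigma_{\bx}$. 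The paddings in Step~1 are routine.
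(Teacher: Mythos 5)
Your argument is formally admissible under the paper's literal definition of $\gnn_{\textsl{pl}}(d,L)$: nothing in that definition bounds the number of linear pieces of the activation or requires the activation to be shared across hypotheses, the padding in Step~1 is routine, and with the lookup-table activation $\sigma_{\bx}$ you indeed get $\xi_{\bx}(G_i)=\sigma_{\bx}(i)=x_i$, so the $m$ edgeless graphs are shattered. In that sense the proof is correct, but it takes a genuinely different and much cheaper route than the paper, and the difference is substantive rather than cosmetic. You hide the entire labeling $\bx$ inside a bespoke readout activation with $\Theta(m)$ breakpoints that changes with the target dichotomy, so the graph/parameter structure does no work beyond producing an injective pre-activation invariant (the order); as you concede in your ``main obstacle'' paragraph, the construction collapses the moment the activation is fixed. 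The paper does the opposite: it fixes one four-piece bump activation $f$, independent of $b$ and of $\bx$, and lets only a single integer parameter $\gamma_{\bx}\in[0,2^b-1]$ vary, recovering the labels by encoding bit patterns in vertex degrees (the numbers $q_{i,j}$, realized with dummy vertices), so that $\xi_{\bx}(G_j)=x_j$.

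Why this matters: the theorem is presented as a strengthening of Theorem~\ref{th:gnn_pp_infinity1}, whose setting (and the lower-bound construction it comes from) varies numeric parameters of a fixed architecture, not the activation itself; a proof that shatters by redesigning the activation per dichotomy establishes a weaker fact about a much richer class, and would not be accepted under the standard convention that the activation is part of the fixed architecture. Moreover, the paper derives the theorem from the bitlength-bounded Proposition~\ref{prop:pl-bit} ($\vcdim_{\Gc}(\Fc_b)=b$ with $\Fc_b$ finite of bitlength $\Oc(b)$), letting $b\to\infty$; your family cannot play that role, since hypotheses carrying $\Theta(m)$ unconstrained real breakpoints in their activation do not form a finite class under a bitlength bound on the parameters, and the upper-bound half of that proposition would fail. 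The fixed-activation, single-varying-parameter form is also what makes the piecewise-linear result parallel to the analytic-activation Theorem~\ref{th:main}. So: your proof goes through only under the most permissive reading of the class, and even then it proves less than the paper's proof does; to match the paper you would need to shatter with the activation held fixed, which is exactly the content of the degree-encoding construction in Proposition~\ref{prop:pl-bit}.
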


Furthermore, as our second contribution, we show that this result also holds when we are restricted to analytic non-polynomial activation functions. Furthermore, we do so using $1$-dimensional GNNs and just a \emph{single parameter}, making this result applicable to the class of GNNs considered by \cite{expressive5}.

\begin{theorem} \label{th:main}
For all $d$, $L$ in $\Nb_{>0}$, it holds that
the VC dimension of $\gnn_{\textsl{pl}}(d,L)$ is unbounded. That is, $\vcdim_{\Gc}({\gnn_{\textsl{anp}}(d,L)})=\infty$, even for architectures with only a single parameter.
\end{theorem}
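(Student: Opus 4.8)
The plan is to reduce the claim to the classical observation that a single real parameter already forces infinite VC dimension once the activation behaves periodically: one encodes the target labeling into the binary expansion of that parameter and lets the graphs read off successive bits. Fix $m \in \Nb_{>0}$ and take as the candidate shattered set the $m$ stars $G_i \coloneqq K_{1,2^i}$, where in $G_i$ the center has color $0$ and each of the $2^i$ leaves has color $1$. I would use the activation $a = \sin$, which is analytic and non-polynomial (hence admissible both for $\gnn_{\textsl{anp}}$ and for the construction of \cite{expressive5}), with initial embeddings $\bh_v^{(0)} = \ell(v) \in \Rb$, a single update layer $\bh_v^{(1)} \coloneqq \sin ( w \sum_{u \in N_G(v)} \bh_u^{(0)} )$ whose only tunable weight is $w$ (the coefficient of the aggregation; the coefficient of $\bh_v^{(0)}$ and the bias are fixed to $0$), and readout $\xi(G) \coloneqq \sin ( \sum_v \bh_v^{(1)} )$ (weight $1$, bias $0$, no parameter). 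On $G_i$ each leaf aggregates only the center's value $0$, so its embedding is $\sin(0) = 0$, while the center aggregates $2^i$ ones, so its embedding is $\sin(2^i w)$. Hence $\sum_v \bh_v^{(1)} = \sin(2^i w) \in [-1,1]$ and $\xi(G_i) = \sin ( \sin(2^i w) )$, whose sign equals that of $\sin(2^i w)$ since $\sin$ is odd and strictly increasing on $[-1,1]$.

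Second, I would verify shattering with the fixed threshold $t = 0$. Given $\bx \in \{0,1\}^m$, set $y \coloneqq \sum_{i=1}^m (1 - x_i) 2^{-i} + 2^{-(m+1)} \in (0,1)$ and take the single parameter to be $w \coloneqq \pi y$. Writing $y = 0.b_1 b_2 b_3 \cdots$ in binary gives $b_i = 1 - x_i$ for $i \le m$ and $b_{m+1} = 1$; and for $1 \le i \le m$ one has $2^i y = N_i + f_i$ with $N_i \in \Nb$, $N_i \equiv b_i \pmod 2$, and $f_i \coloneqq \sum_{k > i} b_k 2^{i-k} \in (0,1)$ (strictly positive, as it contains the term $2^{i-(m+1)}$ coming from $b_{m+1} = 1$). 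Thus $2^i y \bmod 2 = b_i + f_i$, so $\sin(2^i w) = \sin ( \pi(b_i + f_i) ) = (-1)^{b_i}\sin(\pi f_i)$ is nonzero and is positive exactly when $b_i = 0$, i.e.\ exactly when $x_i = 1$. Therefore $\xi(G_i) = \sin ( \sin(2^i w) ) \ge 0$ if and only if $x_i = 1$, which is the shattering condition. As $m$ was arbitrary, $\vcdim_{\Gc}( \gnn_{\textsl{anp}}(1,1) ) = \infty$, witnessed by GNNs with a single parameter, width $1$, and depth $1$.

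Third, I would pass to arbitrary $d, L \in \Nb_{>0}$. For the width nothing is needed, since width $1$ is width at most $d$. For the depth, keep the first layer as above and append $L - 1$ further layers, each of the form $\bh_v^{(t)} \coloneqq \sin ( \bh_v^{(t-1)} )$ (weight $1$ on the own embedding, weight $0$ on the aggregation, no bias, no new parameter); then $\sum_v \bh_v^{(L)}$ equals $\sin$ iterated $L-1$ times applied to the layer-$1$ values, which remain in $[-1,1]$ where $\sin$ is sign-monotone, so $\xi(G_i)$ still has the sign of $\sin(2^i w)$ and the argument above is unchanged. This places the shattering GNNs in $\gnn_{\textsl{anp}}(d,L)$ for all $d, L \ge 1$; and since $\sin$ is among the analytic non-polynomial activations for which \cite{expressive5} construct width-one GNNs as expressive as color refinement, the very same single-parameter family shows that their class, too, has infinite VC dimension over unbounded graphs. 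I expect the work here to be bookkeeping rather than conceptual: one must ensure that no ``spurious'' quantity — the leaf contributions, the constant $\sin(0)$ offsets, the outer $\sin$ of the readout, the appended layers — ever perturbs the sign of $\xi(G_i)$, which is precisely why every non-parameter piece is pinned to a value at which $\sin$ vanishes or is sign-monotone; obtaining the statement for an arbitrary \emph{fixed} analytic non-polynomial activation (rather than a conveniently periodic one) would instead require a substitute for the bit-reading property of $\sin$.
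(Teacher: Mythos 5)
Your proposal is correct and follows essentially the same route as the paper: a single real parameter encodes the label vector bit-by-bit, the sine activation extracts the bit selected by a vertex degree that scales the parameter by a power of the base, the readout sums features of which all but one vertex contribute zero, and extra depth is absorbed by parameter-free sine layers that preserve the decisive value. The differences are gadget-level only: you use labeled stars $K_{1,2^i}$, a sign test at threshold $0$, and fold $\pi$ into the (irrational) parameter, whereas the paper uses unlabeled regular-graph gadgets so that degrees alone silence the dummy vertices, a base-$4$ encoding with thresholds $\tfrac{2}{3}$ versus $\tfrac{1}{2}$ (Lemma~\ref{lemma:sin}), and keeps $\tfrac{\pi}{2}$ inside the activation so the parameter has bitlength $\Oc(b)$ --- bookkeeping that matters for the exact count in Proposition~\ref{prop:gnn_sin} but not for Theorem~\ref{th:main} itself.
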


\section{Piecewise polynomial activation functions} \label{sec:proof1}
We start by proving Theorem~\ref{th:gnn_pp_infinity2}, which as just mentioned, is a slightly stronger version of Theorem~\ref{th:gnn_pp_infinity1}. We follow the same outline as in the original proof by first proving the following proposition, from which the theorem immediately follows as we let the bitlength tend to infinity.

\begin{proposition}[Analog of Proposition 3.5 in \cite{generalization1}]\label{prop:pl-bit}
    There exists family $\Fc_b$ of $1$-layer GNNs of width 1 and bitlength $\Oc(b)$ using piecewise linear activation functions such that $\vcdim_{\Gc}(\Fc_b)=b$.
\end{proposition}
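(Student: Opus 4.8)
The plan is to encode the desired $b$-bit classification pattern directly into the \emph{shape} of the piecewise linear activation function, and to take as the shattered set $b$ edgeless graphs of pairwise distinct orders, so that the scalar fed into the readout is simply the order of the graph. Concretely, for $\mathbf{x}\in\set{0,1}^b$ let $\sigma_{\mathbf{x}}\colon\Rb\to\Rb$ be the continuous piecewise linear function that equals $1$ on $(-\infty,0]$, linearly interpolates the values $1,x_1,x_2,\dots,x_b$ at the integer points $0,1,2,\dots,b$, and equals $x_b$ on $[b,\infty)$. It has $b+2$ linear pieces, and since its breakpoints sit at consecutive integers it is fully described by the string $\mathbf{x}$ together with $\Oc(\log b)$ bookkeeping bits; hence it has bitlength $\Oc(b)$.

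Using $\sigma_{\mathbf{x}}$ I would define a GNN $\gnn_{\mathbf{x}}$ of width $1$ and depth $1$ as follows: the initial embedding is $\bh^{(0)}_v=0$ for every vertex (consistent with every labeling), the aggregation is summation over neighbours, the single update layer is the perceptron $\bh^{(1)}_v\coloneqq\sigma_{\mathbf{x}}\bigl(0\cdot\bh^{(0)}_v+0\cdot s_v-1\bigr)$ with $s_v=\sum_{u\in N_G(v)}\bh^{(0)}_u$, so that $\bh^{(1)}_v=\sigma_{\mathbf{x}}(-1)=1$, and the readout is the perceptron with unit weight and zero bias applied to the summed embeddings, $\xi_{\mathbf{x}}(G)\coloneqq\sigma_{\mathbf{x}}\bigl(\sum_{v\in V(G)}\bh^{(1)}_v\bigr)$. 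All parameters besides the activation function are absolute constants, so $\gnn_{\mathbf{x}}$ has bitlength $\Oc(b)$; set $\Fc_b\coloneqq\set{\gnn_{\mathbf{x}}\mid\mathbf{x}\in\set{0,1}^b}$, a family of $1$-layer, width-$1$, piecewise linear GNNs of bitlength $\Oc(b)$.

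For the lower bound take $G_i$, $i\in[b]$, to be the edgeless graph on $i$ vertices all carrying the same label; these $b$ graphs are pairwise non-isomorphic. Evaluating $\gnn_{\mathbf{x}}$ on $G_i$, every vertex receives $\bh^{(1)}_v=1$ (the update ignores its inputs, and the empty-neighbourhood sum is $0$), so the argument of the readout is exactly $|V(G_i)|=i$ and $\xi_{\mathbf{x}}(G_i)=\sigma_{\mathbf{x}}(i)=x_i$. Taking threshold $t=1/2$ we get $\xi_{\mathbf{x}}(G_i)\ge t$ iff $x_i=1$; as this holds for every $\mathbf{x}\in\set{0,1}^b$, the family $\Fc_b$ shatters $\set{G_1,\dots,G_b}$, so $\vcdim_{\Gc}(\Fc_b)\ge b$. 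For the matching upper bound, shattering a set of $m$ graphs forces the class to contain at least $2^m$ distinct classifiers (one per labeling of the set); since $|\Fc_b|=2^b$ — already the restrictions $G_i\mapsto\xi_{\mathbf{x}}(G_i)=x_i$ are pairwise distinct — no set of $b+1$ graphs can be shattered, hence $\vcdim_{\Gc}(\Fc_b)\le b$. Letting $b\to\infty$ then yields Theorem~\ref{th:gnn_pp_infinity2}.

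I do not anticipate a genuine difficulty; the delicate points are purely bookkeeping: checking that the degenerate perceptron parameters (zero weights and bias $-1$ in the update, unit weight and zero bias in the readout) are admissible single layer perceptrons, that the empty-neighbourhood sum is indeed $0$ so that the argument also covers $G_1$, and — to obtain the stated bitlength $\Oc(b)$ rather than $\Oc(b\log b)$ — that the regular integer spacing of the breakpoints of $\sigma_{\mathbf{x}}$ allows it to be specified with only $\Oc(b)$ bits.
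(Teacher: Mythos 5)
Your proof is correct, but it takes a genuinely different route from the paper. The paper keeps the activation function \emph{fixed} (a single hat-shaped piecewise linear $f$ independent of $b$ and $\bx$) and encodes $\bx$ in one integer weight $\gamma_\bx=\sum_k x_k2^{k-1}$ of bitlength $b$; the shattered graphs are then nontrivial gadgets whose vertex degrees enumerate exactly the integers with $j$-th bit equal to $1$, so that $\xi_\bx(G_j)=x_j$ comes out of a degree-versus-parameter comparison. You instead put all the information into the \emph{shape} of the activation $\sigma_\bx$ (values $x_1,\dots,x_b$ at integer breakpoints), which lets you use degenerate perceptron weights and the simplest possible graphs (edgeless graphs of orders $1,\dots,b$, distinguished purely by the readout sum). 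Both constructions stay inside $\gnn_{\textsl{pl}}(1,1)$ as the class is defined, your counting upper bound ($|\Fc_b|=2^b$ classifiers cannot shatter $b+1$ graphs) is cleaner than the paper's appeal to standard results for finite classes, and your encoding remark does keep the description within $\Oc(b)$ bits. What you lose is the sharper message the paper is after: in its construction the hypothesis class varies only through a single numeric parameter under one fixed activation (mirroring the single-parameter sine construction of Section~\ref{sec:proof2}, and matching the convention stated there that bitlength refers to parameters, not to the activation's specification), whereas your family needs a different activation per classifier, so it says nothing about GNNs with a fixed piecewise linear nonlinearity. What you gain is simplicity: trivial graphs, a trivial evaluation, and an elementary cardinality argument for the upper bound. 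Both suffice for Theorem~\ref{th:gnn_pp_infinity2} as stated.
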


\begin{proof}
The upper bound follows from standard learning-theoretic results for finite classes of GNNs where an upper bound in known logarithmic in the size of the class. The size of $\Fc_b$ is finite due to the bounded bitlength, feature width and layer depth assumptions. More precisely,
the size of $\Fc_b$ is exponential in $b$, from which the result follows.

For the lower bound, we define the GNNs of $\Fc_b$ and construct a set of b graphs that are shattered by $\Fc_b$. The GNNs are parameterized by $\bx\in\set{0,1}^b$ using the parameter $\gamma_\bx=\sum_{k=1}^{b}{x_k\cdot2^{k-1}}$ taking on integer values in $[0,2^b-1]$. The graph invariant computed by the GNN is defined as follows:
\begin{align*}
    h^{(0)}_{\bx,v} &= 1, \\
    h^{(1)}_{\bx,v} &= f\left(\gamma_\bx h^{(0)}_{\bx,v} -\sum_{u\in N_G(v)}{h^{(0)}_{\bx,u}}\right)\text{, and} \\
    \xi_\bx(G) & =\sum_{v\in V(G)}{h^{(1)}_{\bx,v}}.
\end{align*}
We use the following piecewise linear activation function, independent of $b$ or $\bx$:
\begin{align*}
    f(x) &=\begin{cases}
        0 & \text{if } x \leq -3; \\
        x+3 & \text{if } -3 < x \leq -2; \\
        -x-1 & \text{if } -2 < x < -1;\text{ or} \\
        0 & \text{if }  x \geq -1. \\
    \end{cases}
\end{align*}
As can be observed, the GNNs are indeed of width and depth one, and we have only used two parameters, the constant $-1$ and the variable parameter $\gamma_\bx$ for $\bx\in\{0,1\}^b$ of bitlength $b$.

We next define the graph collection. For $j=1,2,\dots,b$, let graph $G_j$ be constructed by starting with $2^{b-1}$ vertices denoted $s_0,\dots,\allowbreak s_{2^{b-1}-1}$, each of which will be connected to a number of dummy vertices to ensure specific vertex degrees.

Let $\boldsymbol{\iota}_i=(\iota_{i,1},\iota_{i,2},\dots,\iota_{i,b-1})\in\{0,1\}^{b-1}$ be the binary representation of an integer $i$ in $[0,2^{b-1}-1]$, that is, $i=\sum_{k=1}^{b-1}\iota_{i,k}\cdot2^{k-1}$. The desired degree $d_i$ of vertex $s_i$ in graph $G_j$ will be $d_i=q_{i,j}+2$ where $q_{i,j}$ is given by:
$$
q_{i,j} = \sum_{k=1}^{j-1}\iota_{i,k}2^{k-1}  + 2^{j-1} + \sum_{k=j}^{b-1}\iota_{i,k}2^{k}\in [0,2^b-1].
$$
In other words, the binary representation of $q_{i,j}$ is $(\iota_{i,1}\iota_{i,2}\dots\iota_{i,j-1}1\iota_{i,j}\dots\iota_{i,b-1})$ meaning that $q_{i,j}$'s $j$-th bit will be 1. We achieve these vertex degrees by adding sufficiently many dummy vertices, all of which have degree 1.

Furthermore, by this construction for any integer in $[0, 2^b-1]$ whose $j$-th bit is 1, there will be exactly one $i$ such that $q_{i,j}$ is this integer.
To see this, we show that the sets $A_j = \set{a \in [0, 2^b-1] \mathrel{\big|} \text{$a$'s $j$-th bit is 1}}$ and $Q_j = \set{q_{i,j} \mathrel{\big|} i \in [0, 2^{b-1}-1]}$ are equal. 
\begin{itemize}
    \item $Q_j \subseteq A_j$ because every $q_{i,j}$ is an integer in $[0, 2^{b}-1]$ (because their bit length is $b$) whose $j$-th bit is 1 (by construction).
    \item $|A_j| = 2^{b-1}$ because there are $2^b$ integers in $[0, 2^b-1]$ and half of those have their $j$-th bit equal to 1.
    \item $|Q_j| = 2^{b-1}$ because there are are $2^{b-1}$ integers $i$ in $[0, 2^{b-1}-1]$ and each $i$ corresponds to a different $q_{i,j}$ ($i$'s bit representation is contained within the bit representation of $q_{i,j}$).
\end{itemize}

We finally show that the embeddings of the GNN $\xi_\bx$ on $G_1,\ldots,G_b$ 
yields $x_1,\ldots,x_b$, respectively.
This shows that this set of $b$ graphs
can be shattered, as desired.
Indeed, denoting dummy vertices with $\delta$:
\begin{align*}
    h^{(1)}_{\bx,\delta}   &= f(\underbrace{\gamma_\bx-1}_{\geq-1}) = 0, \text{ and} \\
    h^{(1)}_{\bx,s_i} &= f(\gamma_\bx-d_i) = f(\gamma_\bx-q_{i,j}-2) = \begin{cases}
        1 & \text{if } \gamma_\bx = q_{i,j};\text{ and} \\
        0 & \text{else.}
    \end{cases}
\end{align*}
Now if $x_j=1$, then $\gamma_\bx$ will be an integer in $[0,2^b-1]$ whose $j$-th bit is 1. This means there is exactly one $i$ such that $q_{i,j}$ is equal to $\gamma_\bx$ and thus there is exactly one vertex $s_i$ whose value is 1. On the other hand, if $x_j=0$, then no such $i$ exist because all $q_{i,j}$ have their $j$-th bit equal to 1 and all vertices will have a weight of zero. Summing all the vertex weights to compute the final graph embedding results in $\xi_\bx(G_j)=x_j$ proving that $b$ graphs can be shattered by a one layer GNN using parameters $\gamma_\bx$ and $-1$ and piecewise linear activation functions.
\end{proof}

\section{Analytic non-polynomial activation functions} \label{sec:proof2}

We already mentioned that $1$-dimensional GNNs with analytic non-polynomial activation functions, in contrast to those with polynomial activation functions, can simulate $\wlone$ for any graph \citep{sammy1,expressive5}. This increase in expressive power, however, does not translate to better generalisation in terms of VC dimension. Indeed, in this section we present a proof of Theorem~\ref{th:main}, showing that these GNNs have again unbounded VC dimension. Furthermore, we show that even a single parameter in the GNNs suffices.

We first need the following lemma, showing that the
sine function can be used to extract bits from its argument.

\begin{lemma} \label{lemma:sin}
Let $b\in\Nb_{>0}$, $\bx=(x_1,x_2,\dots x_n)\in\{0,1\}^b$ and $\gamma_\bx= \sum\nolimits_{k=1}^{b}\frac{x_k}{4^k}$.
Then $\gamma_\bx\in[0,\frac{1}{3}]$ and for $j>0$
$$
\sin\bigl(4^j\frac{\pi}{2}\gamma_\bx\bigr) = 
\begin{cases}
 = 0 & \text{if } j > b; \\
 \geq \frac{2}{3} & \text{if } x_j = 1; \text{ and} \\
 \leq \frac{1}{2} & \text{if } x_j = 0.
\end{cases}
$$

\end{lemma}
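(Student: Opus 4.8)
The plan is to write the argument $4^{j}\frac{\pi}{2}\gamma_\bx$ as an integer multiple of $\frac{\pi}{2}$ plus a small remainder, and then read off the value of the sine from the residue of that integer modulo $4$.

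First I would pin down the range of $\gamma_\bx$. Since every $x_k\in\{0,1\}$, we have $0\le\gamma_\bx\le\sum_{k=1}^{\infty}4^{-k}=\frac{1}{3}$, which is the first claim. More generally, for $j\ge 1$ I would split the shifted value as
\[
4^{j}\gamma_\bx=\sum_{k=1}^{b}x_k\,4^{\,j-k}=N_j+\varepsilon_j,\qquad
N_j:=\sum_{k=1}^{\min(j,b)}x_k\,4^{\,j-k},\qquad
\varepsilon_j:=\sum_{k=j+1}^{b}x_k\,4^{\,j-k},
\]
where $N_j$ is a non-negative integer and, by the same geometric-series estimate, $0\le\varepsilon_j\le\sum_{m\ge 1}4^{-m}=\frac{1}{3}$ (with $\varepsilon_j=0$ once $j\ge b$). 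Using base $4$ rather than base $2$ in $\gamma_\bx$ is exactly what keeps this tail below $\frac{1}{3}$.

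Next I would determine $N_j\bmod 4$. If $j>b$, every exponent $j-k$ (for $k\le b$) is at least $1$, so $4^{j}\gamma_\bx$ is a multiple of $4$ and $\varepsilon_j=0$; hence $4^{j}\frac{\pi}{2}\gamma_\bx$ is an integer multiple of $2\pi$ and the sine vanishes. If $j\le b$, the term $k=j$ contributes $x_j$ while every term with $k<j$ is a multiple of $4$, so $N_j\equiv x_j\pmod 4$; writing $N_j=4M$ or $N_j=4M+1$ accordingly and using $2\pi$-periodicity of $\sin$ gives
\[
\sin\Bigl(4^{j}\tfrac{\pi}{2}\gamma_\bx\Bigr)=
\begin{cases}
\sin\bigl(\tfrac{\pi}{2}\varepsilon_j\bigr) & \text{if } x_j=0,\\[4pt]
\cos\bigl(\tfrac{\pi}{2}\varepsilon_j\bigr) & \text{if } x_j=1.
\end{cases}
\]
Since $\tfrac{\pi}{2}\varepsilon_j\in[0,\tfrac{\pi}{6}]$, monotonicity of $\sin$ (increasing) and $\cos$ (decreasing) on this interval yields $\sin(\tfrac{\pi}{2}\varepsilon_j)\le\sin\tfrac{\pi}{6}=\tfrac12$ and $\cos(\tfrac{\pi}{2}\varepsilon_j)\ge\cos\tfrac{\pi}{6}=\tfrac{\sqrt3}{2}\ge\tfrac23$, which are the two remaining cases.

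There is no genuine obstacle; the argument is elementary. The only point demanding care is the arithmetic bookkeeping: computing $N_j\bmod 4$ correctly, and checking that the tail bound $\varepsilon_j\le\tfrac13$ is tight enough for the trigonometric thresholds to separate, i.e.\ that $\cos\tfrac{\pi}{6}=\tfrac{\sqrt3}{2}$ clears $\tfrac23$ while $\sin\tfrac{\pi}{6}$ stays at $\tfrac12$. This is precisely why the coefficients are $4^{-k}$ and not $2^{-k}$: with base $2$ the tail could approach $1$, pushing the relevant cosine down to $0$ and destroying the gap.
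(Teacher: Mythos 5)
Your proposal is correct and follows essentially the same route as the paper: split $4^{j}\gamma_\bx$ into an integer part (a multiple of $4$ from the bits above $j$, plus the bit $x_j$ itself) and a tail bounded by $\tfrac13$ via the geometric series, then use $2\pi$-periodicity and monotonicity of sine on the resulting interval; your reformulation via $N_j\bmod 4$ and $\cos(\tfrac{\pi}{2}\varepsilon_j)\ge\tfrac{\sqrt3}{2}$ is just a cosmetic variant of the paper's bound $\sin(\tfrac{\pi}{2}+\epsilon)\ge\sin(\tfrac{\pi}{2}+\tfrac{\pi}{6})$. No gaps.
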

\begin{proof}
We first observe that irrespective of the choice of $\bx$, for $i \leq b$:
\begin{equation} \label{eq:1}
\sum_{k=1}^{i}\dfrac{x_k}{4^k} \leq 
\sum_{k=1}^{\infty}\dfrac{1}{4^k}=\dfrac{1}{3},
\end{equation}
as desired. 

For the second part, we first multiply $\gamma_\bx$ with $4^{j}\frac{\pi}{2}$ for $j>b$:
$$
    4^{j}\dfrac{\pi}{2}\gamma_\bx = 4\frac{\pi}{2}\sum_{i=1}^{b}\underbrace{4^{j-1}4^{-i}x_i}_{\in\Nb} = 2k\pi \quad\text{with }k\in\Nb
$$
and find $\sin\bigl(4^{j}\dfrac{\pi}{2}\gamma_\bx\bigr)=\sin(2k\pi)=0$.
Next, for $j=1,\dots,b$ we find the following:
\begin{align*}
    4^{j}\dfrac{\pi}{2}\gamma_\bx & = \frac{\pi}{2}\sum_{i=1}^{b}4^j4^{-i}x_i \\
    & = \frac{\pi}{2}\sum_{i=1}^{j-1}{4^{j-i}x_i} + \frac{\pi}{2}x_j + \frac{\pi}{2}\sum_{i=j+1}^{b}{4^{j-i}x_i} \\
    & = 2\pi\sum_{i=1}^{j-1}\underbrace{4^{j-i-1}{x_i}}_{\in\Nb} + \frac{\pi}{2}x_j + \frac{\pi}{2}\sum_{i=1}^{b-j}{4^{-i}x_{i+j}} \\
    & = 2k\pi + \frac{\pi}{2}x_j + \epsilon \quad\text{with }\epsilon \leq \frac{\pi}{6},k\in\Nb
\end{align*}
where the last step uses \eqref{eq:1} to bound $\epsilon$. After applying the $\sin$ function, we obtain
\begin{align*}
\sin\bigl(4^j\dfrac{\pi}{2}\gamma_\bx\bigr) 
& = \sin\bigl(2k\pi + \frac{\pi}{2}x_j + \epsilon\bigr) \\
&= \sin\bigl(\frac{\pi}{2}x_j + \epsilon\bigr) \\
&=\begin{cases}
\geq \sin\bigl(\frac{\pi}{2} + \frac{\pi}{6}\bigr) \approx 0.866 & \text{if } x_j = 1; \text{ and} \\
\leq \sin\bigl(\frac{\pi}{6}\bigr) = 0.5 & \text{if } x_j = 0.
\end{cases}
\end{align*}
as desired.
\end{proof}

We next use Lemma~\ref{lemma:sin} to show that there is a finite class $\Fc_b$ of GNNs in $\gnn_{\textsl{anp}}(1,1)$, each GNN consisting of one layer, of dimension one, and only using  single parameters of bitlength $\Oc(b)$, which can shatter $b$ graphs. Furthermore, the GNNs in $\Fc_b$ use a modified sine function as activation function. We emphasize that the bitlength refers to the parameters and not to the specification of the activation function.

\begin{proposition} \label{prop:gnn_sin}
There exists a family $\Fc_b$ of $1$-dimensional, $1$-layer GNNs with bitlength $\Oc(b)$ 
such that $\vcdim_{\Gc}(\Fc_b)=b$.
\end{proposition}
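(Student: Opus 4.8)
The plan is to follow the structure of the proof of Proposition~\ref{prop:pl-bit}, but to replace the piecewise linear ``bit selector'' $f$ by the sine-based selector of Lemma~\ref{lemma:sin} and to replace the degree-encoding gadget built from dummy leaves by a regular-graph gadget. For each $\bx\in\set{0,1}^{b}$ I would take the GNN whose single parameter is $\gamma_\bx=\sum_{k=1}^{b}x_k/4^{k}$, whose initial embedding is the constant $h^{(0)}_{\bx,v}=1$, and whose update applies the modified-sine activation $z\mapsto\sin(\tfrac{\pi}{2}z)$ to the perceptron output $\gamma_\bx\cdot(\sum_{u\in N_G(v)}h^{(0)}_{\bx,u})=\gamma_\bx\,d_G(v)$, so that $h^{(1)}_{\bx,v}=\sin(\tfrac{\pi}{2}\gamma_\bx d_G(v))$; the readout then applies the modified-sine activation $z\mapsto\sin(\tfrac{\pi}{2M}z)$, with $M\coloneqq 4^{b}+1$, to the summed final embedding $\sum_{v\in V(G)}h^{(1)}_{\bx,v}$. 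This is a $1$-layer, width-$1$ GNN in $\gnn_{\textsl{anp}}(1,1)$ that uses only the single parameter $\gamma_\bx$ --- the remaining perceptron weights and biases are fixed constants, and the scaling factors $\tfrac{\pi}{2}$ and $\tfrac{\pi}{2M}$ belong to the activation functions and so do not count towards the bitlength --- and since $\gamma_\bx$ is rational with denominator $4^{b}$ it has bitlength $\Oc(b)$. Let $\Fc_b$ denote the set of these $2^{b}$ GNNs.

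For the upper bound I would argue exactly as in Proposition~\ref{prop:pl-bit}: the assignment $\bx\mapsto\gamma_\bx$ is injective (uniqueness of base-$4$ expansions), so $|\Fc_b|=2^{b}$, and the standard bound $\vcdim\le\log_2|\Fc_b|$ for finite classes yields $\vcdim_{\Gc}(\Fc_b)\le b$.

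For the lower bound I would exhibit $b$ graphs that $\Fc_b$ shatters. For $j\in[b]$ let $G_j$ be any $4^{j}$-regular graph on $M=4^{b}+1$ vertices with all vertex labels equal; such a graph exists because $4^{j}\le 4^{b}=M-1$ and $4^{j}M$ is even (for $j=b$ it is the complete graph $K_{M}$). Every vertex of $G_j$ then has degree $4^{j}$, so $h^{(1)}_{\bx,v}=\sin(4^{j}\tfrac{\pi}{2}\gamma_\bx)$ for all $v$, the summed final embedding equals $M\sin(4^{j}\tfrac{\pi}{2}\gamma_\bx)$, and the point at which the readout activation is evaluated, namely $\tfrac{\pi}{2M}$ times that sum, equals $\tfrac{\pi}{2}\sin(4^{j}\tfrac{\pi}{2}\gamma_\bx)$. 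Since $\gamma_\bx\in[0,\tfrac13]$ and $1\le j\le b$, Lemma~\ref{lemma:sin} gives $\sin(4^{j}\tfrac{\pi}{2}\gamma_\bx)\ge\tfrac23$ when $x_j=1$ and $\le\tfrac12$ when $x_j=0$, so this point lies in $[\tfrac{\pi}{3},\tfrac{\pi}{2}]$ when $x_j=1$ and in $[-\tfrac{\pi}{2},\tfrac{\pi}{4}]$ when $x_j=0$; as $\sin$ is increasing on $[-\tfrac{\pi}{2},\tfrac{\pi}{2}]$, we get $\xi_\bx(G_j)\ge\sin(\tfrac{\pi}{3})$ when $x_j=1$ and $\xi_\bx(G_j)\le\sin(\tfrac{\pi}{4})$ when $x_j=0$. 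Because $\sin(\tfrac{\pi}{4})<\tfrac34<\sin(\tfrac{\pi}{3})$, the fixed threshold $t\coloneqq\tfrac34$ satisfies $\xi_\bx(G_j)\ge t\iff x_j=1$ for all $\bx\in\set{0,1}^{b}$ and all $j\in[b]$; thus $\set{G_1,\dots,G_b}$ is shattered and $\vcdim_{\Gc}(\Fc_b)\ge b$, finishing the proof.

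The one genuinely new ingredient, and the step I expect to be the main obstacle, is suppressing the contribution of the vertices that do not carry the $j$-th bit. In Proposition~\ref{prop:pl-bit} the dummy leaves are harmless because $f$ vanishes identically to the right of $-1$, so $f(\gamma_\bx-1)=0$; the modified sine has no such flat region, so a gadget with degree-$1$ dummies would add noise of order $M\sin(\tfrac{\pi}{2}\gamma_\bx)$ that can drown the signal. Making each $G_j$ regular forces all of its vertices to behave identically and removes the noise, at the cost of a readout scaling that depends on the graph's order --- which is exactly why all $G_j$ must have the same order $M$, chosen here as $4^{b}+1$, just large enough to host a $4^{j}$-regular graph for every $j\in[b]$. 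The remaining points (injectivity of $\bx\mapsto\gamma_\bx$, existence of the regular graphs, the elementary threshold inequalities, and checking that a single parameter suffices) are routine.
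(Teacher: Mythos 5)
Your proposal is correct, but it takes a genuinely different route for the lower bound. You reuse the same single parameter $\gamma_\bx=\sum_k x_k/4^k$ and the bit-extraction Lemma~\ref{lemma:sin}, but your graph gadget differs: you take $G_j$ to be a $4^j$-regular graph on a \emph{common} order $M=4^b+1$, so every vertex carries the signal $\sin(4^j\tfrac{\pi}{2}\gamma_\bx)$, and you then undo the factor $M$ by a readout activation $z\mapsto\sin(\tfrac{\pi}{2M}z)$, separating the two cases with the threshold $3/4$ via monotonicity of $\sin$ on $[-\tfrac{\pi}{2},\tfrac{\pi}{2}]$. The paper instead keeps a plain summation readout and one fixed activation $\sin(\tfrac{\pi}{2}r)$ for \emph{all} $b$: each $G_j$ has a single ``result'' vertex of degree $4^j-1$ attached to two regular blocks arranged so that every other vertex has degree $4^{b+1}-1$, whence those vertices evaluate \emph{exactly} to zero (the $j>b$ case of Lemma~\ref{lemma:sin}) and the graph invariant equals $\sin(4^j\tfrac{\pi}{2}\gamma_\bx)$ on the nose. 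Your version buys simpler graphs (plain regular graphs of order $4^b+1$, no special vertex) and avoids the exact-zero trick, at the cost of a readout whose specification (the factor $\tfrac{\pi}{2M}$, equivalently an extra fixed weight $1/M$) depends on $b$; this is admissible under the paper's convention that bitlength counts only the parameters and does not affect the finite-class upper bound, but it means the activation/architecture encodes $\Oc(b)$ bits and is not a single activation fixed once and for all, which is the cleaner feature the paper's gadget preserves and which matches most closely the ``single parameter, fixed $\sin(\tfrac{\pi}{2}r)$'' reading of Theorem~\ref{th:main}. Your existence argument for the regular graphs ($4^j\le M-1$ and $4^jM$ even), the injectivity of $\bx\mapsto\gamma_\bx$, and the finite-class upper bound $\vcdim\le\log_2|\Fc_b|=b$ all check out, so the proposition as stated is fully proved.
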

\begin{proof}
The upper bound follows from standard learning-theoretic results for finite classes of GNNs where an upper bound in known logarithmic in the size of the class. The size of $\Fc_b$ is exponential in $b$ due to the bounded bitlength, feature width and layer depth, from which the result follows.

The above bounds for any such finite class $\Fc_b$. For the lower bound, we make this class explicit. First, for $\bx=(x_1,\ldots,x_b)\in\{0,1\}^b$ we define the parameter $\gamma_\bx=\sum\nolimits_{k=1}^{b}\frac{x_k}{4^k}$ and the activation function $a:\Rb\to\Rb:r\mapsto \sin(\frac{\pi}{2}r)$.
The graph invariant computed by the GNNs on a graph $G$ is defined as follows:

\begin{itemize}
    \item Initially, all vertices $v$ in $V(G)$ have the same $1$-dimensional feature, that is, $h_{\bx,v}^{(0)}=1$;
    \item The first layer computes the $1$-dimension feature $$h_{\bx,v}^{(1)} = a\Bigl(\gamma_\bx h_{\bx,v}^{(0)} + \gamma_\bx \textstyle\sum_{u\in N_G(v)}h_{\bx,u}^{(0)}\Bigr)=a\bigl((1+d_G(v))\gamma_\bx\bigr);$$ 
    \item The readout function sums the features computed in the first layer, that is, $\xi_{\gamma_\bx}(G)=\sum_{v\in V(G)}h_{\bx,v}^{(1)}$.
\end{itemize}
The class $\Fc_b$ consists of $2^b$ GNNs specified above, parameterized by $\bx\in\{0,1\}^b$.
We now define $b$ graphs that will be shattered by $\Fc_b$. For $j=1,2,\dots,b$ let $G_j$ be a graph consisting of the following three subgraphs:
\begin{itemize}
    \item a single result vertex denoted $r$;
    \item a $(4^j-4)$-regular graph of order $4^j-1$, with vertices denoted by $s_i$; and
    \item a $(4^{b+1}-4^j)$-regular graph of order $4^{b+1}-4^j+2$ with vertices denotes $t_i$.
\end{itemize}
These vertices are connected as follows:
\begin{itemize}
    \item Vertex $r$ is connected with all vertices of the second subgraph; and
    \item All vertices in second subgraph are connected with all vertices in the third subgraph.
\end{itemize}

It is easy to verify (and well-known) that the regular graphs used above exist. Indeed, this follows from the observation that their degree is strictly smaller than their order and the degree is even.  Table~\ref{tab:graph_summary_sin} summarizes the vertex types with their counts, degrees and connections.

\begin{table}[t]
    \centering
   
    \begin{tabular}{c|ccl}
         Vertex & Count & Degree & Connections  \\
         \hline
         $r$ & 1 & $4^j-1$ & all $s_i$ \\
         $s_i$ & $4^j-1$ & $1 + 4^j-4 + 4^{b+1}-4^j+2=4^{b+1}-1$ & $r$ and all $t_i$ \\
         $t_i$ & $4^{b+1}-4^j+2$ & $4^{b+1}-4^j + 4^j-1=4^{b+1}-1$ & all $s_i$
    \end{tabular} \caption{Summary of the different vertex types in graph $G_j$ using the proof of Proposition~\ref{prop:gnn_sin}.}\label{tab:graph_summary_sin}
\end{table}

We now run the GNN $\xi_{\gamma_\bx}$ on $G_j$ with, as mentioned before, $\gamma_\bx=\sum\nolimits_{k=1}^{b}\frac{x_k}{4^k}$ and find the following features
\begin{align*}
h_{\bx,t_i}^{(1)} = h_{\bx,s_i}^{(1)} &= a((1 + 4^{b+1} - 1) \gamma_\bx) =\sin\bigl(4^{b+1}\frac{\pi}{2}\gamma_\bx\bigr)= \sin(2k\pi)=0 \text{ with $k\in\Nb$, and}\\
h_{\bx,r}^{(1)} &= a((1 + 4^{j} -1) \gamma_\bx) = \sin\bigl(4^j\frac{\pi}{2}\gamma_\bx\bigr).
\end{align*}
This results in the following graph embedding
$$
\xi_{\gamma_\bx}(G_j) = \sin\bigl(4^j\frac{\pi}{2}\gamma_\bx\bigr).
$$
Due to Lemma~\ref{lemma:sin}, this quantitiy is larger than $\frac{2}{3}$ if and only if $x_j=1$ from which it follows that the $b$ graphs can be shattered by $\Fc_b$.
\end{proof}

\begin{proof}[Proof of Theorem~\ref{th:main}]
As the $\sin$ function is an analytic non-polynomial function, so is the function $a(r)\coloneq\sin(\frac{\pi}{2}r)$ and the GNNs in $\gnn_{\textsl{anp}}(1,L)$ use arbitrary precision reals, we can apply Proposition~\ref{prop:gnn_sin} with $b$ tending to infinity to achieve the result.

We can add additional layers without affecting the result by setting the first $L-1$ layers to $h_{\bx,v}^{(t)} = \sin\bigl(\frac{\pi}{2} h_{\bx,v}^{(t-1)}\bigr)$ and keeping the final layer as defined in Proposition~\ref{prop:gnn_sin}.
\end{proof}

\section{Conclusion}
In conclusion, our work provides important insights into the generalization properties of graph neural networks (GNNs) by examining their VC dimension. We extend existing results and demonstrate that even the most basic GNNs, including 1-dimensional GNNs with a single parameter and one layer, exhibit an infinite VC dimension when applied to unbounded graphs. This suggests that the ability of GNNs to generalize effectively may be inherently limited, particularly when the complexity of the graph structures increases. Our results cover highly expressive GNNs using analytical non-polynomial activation functions, and the less expressive GNNs with (piecewise) polynomial activation functions. All proofs are constructive.

\bibliography{main}

\end{document}